\def\BibTeX{{\rm B\kern-.05em{\sc i\kern-.025em b}\kern-.08em
    T\kern-.1667em\lower.7ex\hbox{E}\kern-.125emX}}
\newtheorem{theorem}{Theorem}
\newtheorem{lemma}{Lemma}
\theoremstyle{definition}
\begin{document}

\title{Hyperedge Modeling in Hypergraph Neural Networks by using Densest Overlapping Subgraphs}

\author{\IEEEauthorblockN{Mehrad Soltani}
\IEEEauthorblockA{\textit{University of Windsor} \\
Windsor, Canada \\
soltani8@uwindsor.ca}
\and
\IEEEauthorblockN{Luis Rueda}
\IEEEauthorblockA{\textit{University of Windsor} \\
Windsor, Canada \\
lrueda@uwindsor.ca}
}

\maketitle

\begin{abstract}
Hypergraphs tackle the limitations of traditional graphs by introducing {\em hyperedges}. While graph edges connect only two nodes, hyperedges connect an arbitrary number of nodes along their edges. Also, the underlying message-passing mechanisms in Hypergraph Neural Networks (HGNNs) are in the form of vertex-hyperedge-vertex, which let HGNNs capture and utilize richer and more complex structural information than traditional Graph Neural Networks (GNNs).
More recently, the idea of overlapping subgraphs has emerged. These subgraphs can capture more information about subgroups of vertices without limiting one vertex belonging to just one group, allowing vertices to belong to multiple groups or subgraphs. In addition, one of the most important problems in graph clustering is to find densest overlapping subgraphs (DOS).
In this paper, we propose a solution to the DOS problem via Agglomerative Greedy Enumeration (DOSAGE) algorithm as a novel approach to enhance the process of generating the densest overlapping subgraphs and, hence, a robust construction of the hypergraphs. Experiments on standard benchmarks show that the DOSAGE algorithm significantly outperforms the HGNNs and six other methods on the node classification task. 
\end{abstract}

\begin{IEEEkeywords}
hypergraphs, hypergraph neural networks, hyperedge generation, overlapping densest subgraphs, graph representation learning
\end{IEEEkeywords}

\section{Introduction}
While Graph Neural Networks (GNNs) have attracted increasing attention in the past few years, they suffer from the limitation in the assumption of pairwise connections between nodes, which cannot capture the complex relationships between neighbor nodes and limits the capability of high-order correlation modeling. Even message-passing mechanisms enable GNNs to capture even beyond pairwise connections among vertices, they still suffer from the ability to capture indirect relationships \cite{Consistency}.
In this regard, Hypergraph Neural Networks (HGNN) have been proposed to address the challenges of representation learning using high-order correlations \cite{HGNNs}. HGNNs represent data in the hypergraph structure \cite{Learning-with-hypergraphs}. Graphs become hypergraphs when additional information in a graph groups entity nodes together into sets \cite{Hypergraph-visualization}. Also, the message passing in the HGNNs is a two-step process. In this first step, information from vertices is aggregated into the hyperedges to which they belong. In the second step, the aggregated information in each hyperedge is then propagated back to the vertices. Each vertex updates its representation by aggregating information from all its hyperedges. This step effectively allows the sharing of information among all vertices that are connected both directly and indirectly via common hyperedges \cite{HGNNs}.

There is no explicit hypergraph structure in most cases \cite{Methods}. As such, it is necessary to generate a good hypergraph structure to make the most of the high-order correlation among the data. Generally speaking, a hypergraph is created based on two different data structures when the data correlation is without graph structure and when the data correlation is with graph structure. Since most of the data that we have available are in the form of a graph or can be converted into a graph, we target this area for our work. There have been many attempts to identify important subsets of vertices within a graph, and these subsets can be identified in various forms that can either be overlapping \cite{Top-k} subgraphs, unlike cuts in graphs that partition the vertices of a graph into two disjoint subsets. 

In this paper, we introduce a novel approach to HGNNs, which considers the densest overlapping subgraphs \cite{Top-k} in the hypergraph modeling step. To identify the top-$K$ densest subgraphs, we consider a constrained version of the problem, which we call constrained top-$k$-overlapping densest subgraphs (CTODS). This problem, which is shown to be NP-complete, is solved via a new algorithm, which we call densest overlapping subgraphs via agglomerative greedy enumeration (DOSAGE). Our method is not only able to identify the high-correlated subgraphs but also uses the objective function, which takes into account both the density of the subgraphs and the distance between subgraphs in a constrained form that limits the size of each subgraph, as well as the density while ensuring full coverage of the entire graph.

The paper is organized as follows. First, we discuss existing methods for hypergraph modeling and where they fall short. Secondly, we will discuss the top-$K$ densest subgraphs and how, by refining them, we created our DOSAGE algorithm. In the experiment section, we discuss the result of our method in comparison with other hypergraph modeling methods and GNNs. Finally, conclusions and future works are discussed in the last section. The main contributions of this paper are summarized as follows:
(i) enhance HGNN accuracy with a new hypergraph modeling method based on the densest overlapping subgraphs;  
(ii) design a new algorithm for finding the densest overlapping subgraphs; 
(iii) define a new problem: the constrained overlapping subgraphs with full coverage, specific subgraph size, and diameter.

\section{Related Work}
Hypergraphs were first introduced in \cite{Learning-with-hypergraphs} by defining hypergraph as a generalization of a graph in which edges, known as hyperedges, can connect any number of vertices, not just two. This allows for more complex relationships among the objects of interest than simple graphs. 

The authors of \cite{Evolution-of-cooperation} used hypergraph to distinguish the dynamics within an $m$-clique, where each of the $m$ individuals interacts separately in pairs with each of the other $m-1$ individuals, from the dynamics where the $m$ individuals interact all together as a group. It does not provide the necessary isometric properties required for faithful hypergraph representation. This transformation does not capture the full complexity of hypergraph structures, leading to a loss of critical relational information.

Methods that generalize graph edit distance to hypergraphs, such as those proposed in \cite{Hypergraph-co-optimal}, face significant computational challenges. The graph edit distance problem has been found to be NP-hard to compute and APX-hard to approximate, making it impractical for large-scale hypergraph applications. Also, other distance based methods performance that have been proposed in \cite{Methods}, rely heavily on the accuracy of the distance measurement between vertices.

In \cite{Methods}, representation-based methods have been proposed, which construct hyperedges by using feature reconstruction techniques. This method should solve optimization problems to determine reconstruction coefficients, which is computationally expensive, especially when we deal with large datasets.

Some frameworks, such as the one described in \cite{A-tensor-based-algorithm}, construct hypergraphs with a fixed size for each hyperedge. This approach is unsuitable for scenarios where the input data consists of arbitrary hypergraphs with varying hyperedge sizes. 

\begin{table}[h!]
\centering
\caption{Notation and definitions.}
\renewcommand{\arraystretch}{1.5}
\begin{tabular}{|p{2.5cm}|p{5.5cm}|}
\hline
\textbf{Notation} & \textbf{Definition} \\
\hline
\(\mathcal{G}_h^{(\alpha, \beta)} = (\mathcal{V},\mathcal{E})\) & A hypergraph with vertex set \(\mathcal{V}\) and hyperedge set \(\mathcal{E}\), where each hyperedge \(e \in \mathcal{E}\) satisfies \(\alpha \leq |e| \leq \beta\). \\ 
\hline
\(\mathcal{G}_h = (\mathcal{V}, \mathcal{E}, \mathbf{W})\) & Indicates a weighted hypergraph, where \(\mathcal{V}\) is the set of vertices, \(\mathcal{E}\) is the set of hyperedges, and \(\mathbf{W}\) represents the weights of the hyperedges. \\
\hline
\( \mathcal{G}_h = (\mathcal{V}, \mathcal{E}) \) & A hypergraph with the vertex set \( \mathcal{V} \) and the hyperedge set \( \mathcal{E} \). \\
\hline
\( \mathcal{G} = (\mathcal{V}, \mathcal{E}') \) & A simple graph with the vertex \( \mathcal{V} \) and the edge set \(\mathcal{E}'\) . \\
\hline
\( \mathcal{E}'(S) \) & The set of edges in the induced subgraph \( \mathcal{G}[S] \), where \( \mathcal{G}[S] \) is the subgraph of \( \mathcal{G} \) induced by the vertices in \( S \). \\
\hline
\(\mathcal{V}\) & The set of vertices in the graph \(\mathcal{G}\) that are in one hyperedge. \\
\hline
\(\mathcal{E}\) & The set of hyperedges in the hypergraph \(\mathcal{G}_h\). \\
\hline
\(N\) & The number of vertices in \(\mathcal{G}_h\), i.e., \(|\mathcal{V}|\). \\
\hline
\(M\) & The number of hyperedges in \(\mathcal{G}_h\), i.e., \(|\mathcal{E}|\). \\
\hline
\(S\) & A subset of vertices in the hypergraph \(\mathcal{G}_h\). \\
\hline
\( S^c \) & For a vertex subset \( S \subset \mathcal{V} \), denote the complement of \( S \). \\
\hline
\(\mathbf{H}\) & The incidence matrix of the hypergraph \(\mathcal{G}_h\). \\
\hline
\(\mathbf{H}'\) & The incidence matrix of the graph \(\mathcal{G}\).\\
\hline
\(x_i^0\) & The initial feature for the \(i\)-th vertex in \(\mathcal{G}_h\). \\
\hline
\(\mathbf{X}_0\) & The initial features for all vertices in \(\mathcal{G}_h\). \\
\hline
\(\mathbf{X}^t\) & The input feature of the convolution layer \(t\). \\
\hline
\(\mathbf{X}_i^t\) & The embedding for vertex \(i\) in layer \(t\). \\
\hline
\(\mathbf{W}\) & The diagonal matrix of the hyperedge weights. \\
\hline
\(d(v)\) & The degree of vertex \(v\). \\
\hline
\(\delta(e)\) & The degree of hyperedge \(e\). \\
\hline
\(\mathbf{D}_v\) & The diagonal matrix of vertex degrees. \(\mathbf{D}_v \in \mathbb{R}^{N \times N}\). \\
\hline
\(\mathbf{D}_e\) & The diagonal matrix of hyperedge degrees. \(\mathbf{D}_e \in \mathbb{R}^{M \times M}\). \hspace{1cm} \\
\hline
\( w(e) \) & A positive number associated with each hyperedge. \\
\hline

\end{tabular}
\label{table:notations}
\end{table}

\section{Preliminaries of Hypergraphs}
Before diving into what a hypergraph is and how we can generate hypergraphs, let us first discuss and define graphs and hypergraphs.
\subsection{Graphs and Hypergraphs}
First, we review the basic concepts of graphs and hypergraphs. Then, we summarize this paper's important notations and definitions in Table \ref{table:notations}. 

Let \( \mathcal{V} \) be a (typically finite) set of elements, nodes, or objects, which we formally call "vertices", and \( \mathcal{E}' \) be a set of pairs of vertices. Given that, then for two vertices \( u, v \in \mathcal{V} \), an edge is a set \( \{u, v\} \in \mathcal{E}' \), indicating that there is a connection between \( u \) and \( v \). It is then common to represent \( \mathcal{E}' \) as either a boolean adjacency matrix \( \mathcal{A} \) where \( \mathcal{A} \in \{0,1\}^{|\mathcal{V}| \times |\mathcal{V}|} \), where an entry \( \mathcal{A}_{ij} \) is 1 if \( v_i \) and \( v_j \) are connected in \( \mathcal{E}' \); or as an incidence matrix \(\mathbf{H}'\), where now also \( \mathbf{H}' \in \{0,1\}^{|\mathcal{V}| \times |\mathcal{E}'|} \), and an entry \( \mathbf{H}'_{ij} \) is now 1 if the vertex \( v_i \) is in edge \( e'_j \).

Let \( \mathcal{V} \) denote a finite set of elements, nodes, or objects, which we formally call "vertices". Let \( \mathcal{E} \) be a family of subsets \( e \) of \( \mathcal{V} \) such that \( \bigcup_{e \in \mathcal{E}} e = \mathcal{V} \). Then we call \( \mathcal{G}_h = (\mathcal{V}, \mathcal{E}) \) a hypergraph with the vertex set \( \mathcal{V} \) and the hyperedge set \( \mathcal{E} \). A hyperedge containing just two vertices is a simple graph edge. A weighted hypergraph is a hypergraph that has a positive number \( w(e) \) associated with each hyperedge \( e \), showing the importance of the connections inside a hyperedge, called the weight of hyperedge \( e \). Denote a weighted hypergraph by \( \mathcal{G}_h = (\mathcal{V}, \mathcal{E}, \mathbf{W})\) in which \( \mathbf{W} \) denote the diagonal matrix containing the weights of hyperedges. Furthermore, we introduce a hypergraph with constraints on the hyperedge sizes as follows: 
\[
\mathcal{G}_h^{(\alpha, \beta)} = (\mathcal{V}, \mathcal{E})
\]
where each hyperedge \(e \in \mathcal{E} \) satisfies the constraint \(\alpha \leq |e| \leq \beta\), where \(|e|\) denotes the number of vertices in hyperedge \(e\). Later in this paper, we will show why we need constraints on the hyperedge sizes, but for now, let us discuss the general hypergraph.

% \begin{figure}[t]
% \includegraphics[width=0.5\textwidth]{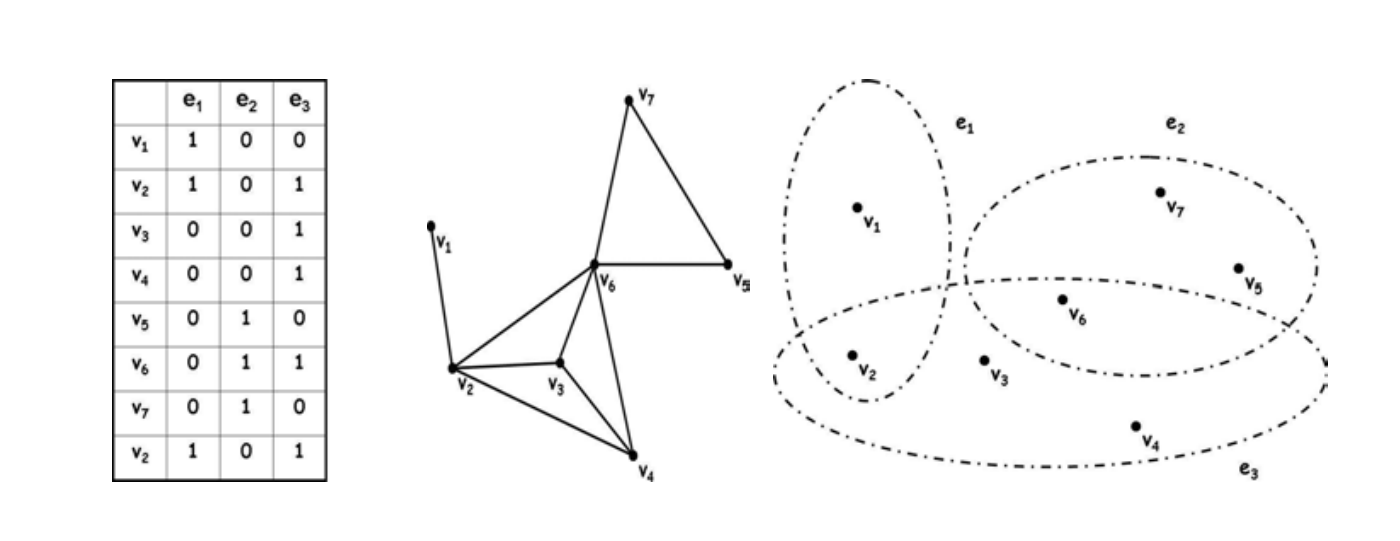}
% \captionsetup{width=0.45\textwidth}
% \caption{\cite{Learning-with-hypergraphs} Hypergraph vs. simple graph. Left: An author set $\mathcal{E} = \{e_1, e_2, e_3\}$ (representing hyperedges) and an article set $\mathcal{V} = \{v_1, v_2, v_3, v_4, v_5, v_6, v_7\}$ (representing vertices). The incidence matrix $\mathbf{H}$ is defined such that the entry $(v_i, e_j)$ is set to 1 if $e_j$ (an author) is associated with article $v_i$ and 0 otherwise. Middle: A simple graph $\mathcal{G} = (\mathcal{V}, \mathcal{E}')$ is depicted where an edge in $\mathcal{E}'$ joins two articles if they share at least one common author. 
% % This simple graph cannot capture the full complexity of relationships, such as whether the same author has written three or more articles. 
% Right: A hypergraph $\mathcal{G}_h = (\mathcal{V}, \mathcal{E})$ that completely illustrates the complex relationships among authors and articles by allowing each hyperedge to connect multiple vertices, representing an author contributing to multiple articles.}
% \label{fig}
% \end{figure}

\cite{Learning-with-hypergraphs} A hypergraph \( \mathcal{G}_h \) can be represented by a \( |\mathcal{V}| \times |\mathcal{E}| \) matrix \(\mathbf{H}\) with entries \( h(v, e) = 1 \) if \( v \in e \) and 0 otherwise. This is called the incidence matrix of \( \mathcal{G}_h \). Then for a vertex \( v \in \mathcal{V} \), its vertex degree is defined as 
\[ d(v) = \sum_{e \in \mathcal{E}} w(e) H(v, e). \] 
For a hyperedge \( e \in \mathcal{E} \), its edge degree is defined as \[ d(e) = \sum_{v \in \mathcal{V}} H(v, e). \] \( D_v \) and \( D_e \) denote the diagonal matrices of vertex degrees and edge degrees, respectively. The initial feature set for each vertex is denoted as  \( X_0 = \{ x_0^1, x_0^2, \ldots, x_0^N \}\) for\(\quad x_0^i \in \mathbb{R}^{C_0},\) where \( C_0 \) is the dimension of the feature.

The adjacency matrix \( \mathcal{A} \) of hypergraph \( \mathcal{G}_h \) is defined as 
\[
\mathcal{A} = \mathbf{H} \mathbf{W} \mathbf{H}^T - \mathbf{D}_v,
\]
where \( \mathbf{H}^T \) is the transpose of \(\mathbf{H}\).

For a vertex subset \( S \subset \mathcal{V} \), let \( S^c \) denote the complement of \( S \). A cut of a hypergraph \( \mathcal{G}_h = (\mathcal{V}, \mathcal{E}, \mathbf{W}) \) is a partition of \( \mathcal{V} \) into two parts \( S \) and \( S^c \). We say that a hyperedge \( e \) is cut if it is incident with the vertices in \( S \) and \( S^c \) simultaneously \cite{Learning-with-hypergraphs}.

Given a vertex subset \( S \subset \mathcal{V} \), define the hyperedge boundary \( \partial S \) of \( S \) to be a hyperedge set which consists of hyperedges which are cut, i.e.
\[
\partial S := \{e \in \mathcal{E} \mid e \cap S \neq \emptyset, e \cap S^c \neq \emptyset\},
\]
and \cite{tag} define the volume \( \text{vol}(S) \) of \( S \) to be the sum of the degrees of the vertices in \( S \), that is,
\[
\text{vol}(S) := \sum_{v \in S} d(v).
\]
Moreover, define the volume of \( \partial S \) by
\[
\text{vol}(\partial S) := \sum_{e \in \partial S} \frac{w(e) |e \cap S| |e \cap S^c|}{\delta(e)}.
\]
Clearly, we have \( \text{vol}(\partial S) = \text{vol}(\partial S^c) \).
The definition given by the above equation can be understood as follows. Let us imagine each hyperedge \( e \) as a clique \cite{Learning-with-hypergraphs}, i.e., a fully connected subgraph. To avoid unnecessary confusion, we call the edges in such an imaginary subgraph the subedges. Moreover, we assign the same weight \( \frac{w(e)}{\delta(e)} \) to all subedges. Then, when a hyperedge \( e \) is cut, there are \( |e \cap S| |e \cap S^c| \) subedges that are cut, and hence, a single sum term in the above equation is the sum of the weights over the subedges which are cut. Naturally, we try to obtain a partition in which the connection among the vertices in the same cluster is dense while the connection between two clusters is sparse. Using the above-introduced definitions, we may formalize this natural partition as
\[
\arg \min_{\emptyset \neq S \subset \mathcal{V}} c(S) := \frac{\text{vol}(\partial S)}{\left(\frac{1}{\text{vol}(S)} + \frac{1}{\text{vol}(S^c)}\right)}.
\]
% For a simple graph, \( |e \cap S| = |e \cap S^c| = 1 \), and \( \delta(e) = 2 \). Thus, the right-hand side of the above equation reduces to the simple graph normalized cut up to a factor of 1/2.

Since we use the densest overlapping subgraphs for hyperedge generation, let us discuss what are top-$K$ overlapping subgraphs and why we use them instead of conventional methods.

\subsection{Top-k-Overlapping Densest Subgraphs}
\cite{Top-k} Given a graph \(\mathcal{G} = (\mathcal{V}, \mathcal{E}')\), and a subset \(S \subseteq \mathcal{V}\), we denote by \(\mathcal{G}[S]\) the subgraph of \(\mathcal{G}\) induced by \(S\), formally \(\mathcal{G}[S] = (S, \mathcal{E}'(S))\), where \(\mathcal{E}'(S)\) is defined as follows:

\[
\mathcal{E}'(S) = \{ \{u, v\} : \{u, v\} \in \mathcal{E}' \ \text{and} \ u, v \in S \}.
\]

As we discussed in the literature review section, many approaches tried to use subgraphs as hyperedges. \cite{Learning-with-hypergraphs} define a hyperedge as a clique that is a fully connected subgraph. Although this definition is good for understanding what a hyperedge is, there are some problems with cliques as hyperedges. The main problem is that the impact of small variations in the topology of the graph has a huge impact on the cliques, which means they are sensitive to small changes. For instance, in the figure \ref{clique}, the number of cliques would break down in half by removing one edge from our subgraph. Because of that, we lose vital relationships among data when trying to form a hyperedge based on a clique. 
That is why we need a more powerful way of defining and creating hyperedges based on a subgraph.

\begin{figure}[t]
\includegraphics[width=0.5\textwidth]{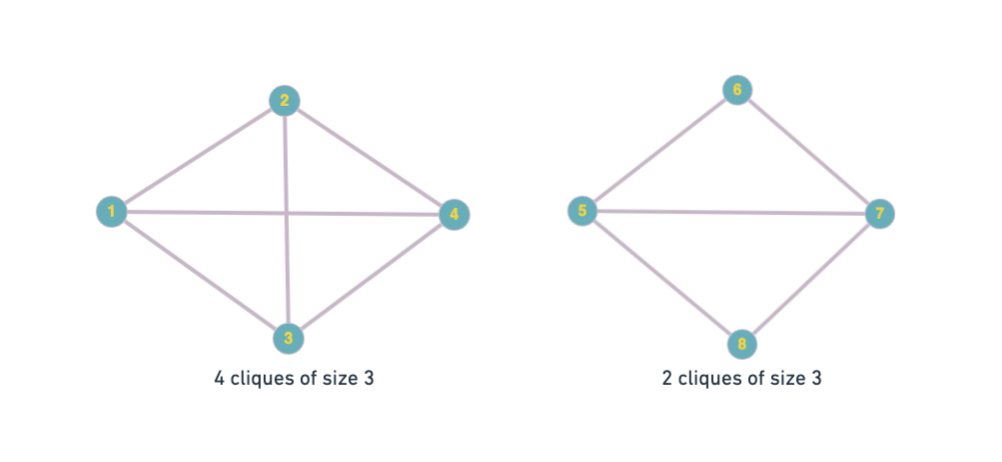}
\captionsetup{width=0.45\textwidth}
\caption{Impact of small variation in the topology of the graph on cliques}
\label{clique}
\end{figure}

The density of a subgraph based on measures other than cliques could be a good option. The Densest Subgraph problem aims at ﬁnding a single densest subgraph in a graph. Goldberg's algorithm \cite{goldberg} was the first one to propose a method for finding the densest subgraph, and other methods have been using this method as a groundwork for their algorithm. However, in many applications like hypergraph modeling, it is of interest ﬁnding a collection of dense subgraphs of a given graph. Also, dense subgraphs are related to non-disjoint communities in many real-world cases \cite{Top-k}. One example could be hubs, which are vertices that are part of several communities \cite{scale-free}. Hence, we need a method that, instead of giving the densest subgraph, finds a collection of subgraphs having maximum density in a given graph. We define the density of subgraphs in the graph as follows:

\[
\text{dens}(\mathcal{G}[S]) = \frac{|\mathcal{E}'(S)|}{|S|}
\]
Thus, the density is the ratio of the total number of edges to the total number of vertices in the subgraph.

Given the fact that some hyperedges might share some vertices too, the subgraphs we are trying to find to form our hyperedges must share some vertices too, and hence, we should do that by letting our subgraphs overlap with each other \cite{Top-k}. It is important to control the amount of overlapping between subgraphs since allowing overlaps leads to a solution that may contain $k$ copies of the same subgraph. To ensure distinctness between the subgraphs, we define a distance function \cite{Top-k} between two subgraphs as: 

\[
d(\mathcal{G}[U], \mathcal{G}[Z]) = 
\begin{cases} 
2 - \frac{|U \cap Z|^2}{|U||Z|} & \text{if } U \neq Z, \\
0 & \text{else}.
\end{cases}
\]

\noindent where \(\mathcal{G}[U]\) and \(\mathcal{G}[Z]\) denote the subgraphs induce by the vertex subsets \(U\) and \(Z\), respectively. Also, \({|U \cap Z|^2}\) is the number of vertices in the intersection of subsets \(U\) and \(Z\). It penalizes subgraphs with a high degree of overlap, which helps ensure that the selected subgraphs are sufficiently distinct from each other. The term {\small $2 - \frac{|U \cap Z|^2} {|U||Z|}$} makes sure that the distance is minimal when the overlap is maximal and vice versa, and it is encouraging subgraphs to be more distinct. 
% Also, this term is chosen to deal with the cases where subgraphs do not have a significant overlap.
When subgraphs share a large number of vertices, the overlap term {\small $\frac{|U \cap Z|^2} {|U||Z|}$} becomes significant, hence reducing the distance and contribution of such subgraphs to the objective function. The distance is bounded between 0 and 2, which makes the distance measure consistent.

Our Top-$K$ overlapping subgraphs algorithm looks for a collection of $K$ subgraphs that maximize an objective function that takes into account both the density of the subgraphs and the distance between the subgraphs of the solution, thus allowing overlap between the subgraphs, which depends on a parameter, \(\lambda\).

\[
r(W) = \text{dens}(W) + \lambda \sum_{i=1}^{k-1} \sum_{j=i+1}^{k} d(\mathcal{G}[W_i], \mathcal{G}[W_j])
\]
where \( W = \{\mathcal{G}[W_1], \mathcal{G}[W_2], \ldots, \mathcal{G}[W_k]\} \) is the set of top-k subgraphs, k is less than the number of vertices in the graph, \( \text{dens}(W) \) is the sum of the densities of the subgraphs in \( W \), and \( \lambda > 0 \) is a parameter that controls the trade-off between density and diversity of the subgraphs. When \(\lambda\) is small, then the density plays a dominant role in the objective function, so the output subgraphs can share a signiﬁcant part of vertices. On the other hand, if \(\lambda\) is large, then the subgraphs share few or no vertices, so that the subgraphs may be disjoint \cite{Top-k}.

This feature makes the top-$K$ overlapping subgraphs a great approach for our method. However, we discuss in the methodology section that finding the overlapping densest subgraphs problem is NP-complete. As such, we provide an efficient yet sub-optimal algorithm that reduces its complexity while empowering the hypergraph construction and, hence, the underlying machine learning tasks.

\section{Problem Definition}
We address the problem of hyperedge generation in hypergraphs by utilizing the concept of top-$K$ densest subgraphs \cite{Top-k}. The goal is to create hyperedges that capture indirect relationships in the data, which is not possible with simple pairwise connections in traditional graph structures. Also, we will show in the next section that finding the densest overlapping subgraphs is np-complete, and because of that, we define constrained overlapping subgraphs as a new problem and propose {\em DOSAGE} algorithm for finding them.

Given a simple graph \( \mathcal{G} = (\mathcal{V}, \mathcal{E}') \), where \( \mathcal{V} \) is the set of vertices and \( \mathcal{E}' \) is the set of edges, we aim to identify and utilize densest overlapping subgraphs to form hyperedges in a hypergraph \(\mathcal{G}_h^{(\alpha, \beta)} = (\mathcal{V}, \mathcal{E})\), where each hyperedge \(e \in \mathcal{E} \) satisfies the constraint \(\alpha \leq |e| \leq \beta\). \\
The problem can be formulated as follows:

Identify the top-k subsets \( S_1, S_2, \ldots, S_k \subseteq \mathcal{V} \) such that the density of each subset is maximized while also ensuring that the size of each subgraph \( S_i \) is between \(\alpha\) and \(\beta\). More formally, this can be expressed as follows:
\[
S_i = \arg\max_{\substack{S \subseteq \mathcal{V} \\ \alpha \leq |S| \leq \beta}} \text{dens}(\mathcal{G}[S]) = \arg\max_{\substack{S \subseteq \mathcal{V} \\ \alpha \leq |S| \leq \beta}} \frac{|\mathcal{E}'(S)|}{|S|}
\]
for \( i = 1, 2, \ldots, k \). \\
Moreover, the objective function also considers the distance between these subgraphs, ensuring that the overlap between the subgraphs is controlled by a parameter \(\lambda\). This means that the selected subgraphs \( S_1, S_2, \ldots, S_k \) should not only be dense but also satisfy the distance constraints. %defined as follows:%
% \[
% r(W) = \text{dens}(W) + \underline{\lambda \sum_{i=1}^{k-1} \sum_{j=i+1}^{k} d(\mathcal{G}[W_i], \mathcal{G}[W_j])}
% \]
% \[
% d(\mathcal{G}[U], \mathcal{G}[Z]) = 
% \begin{cases} 
% 2 - \frac{|U \cap Z|^2}{|U||Z|} & \text{if } U \neq Z, \\
% 0 & \text{else}.
% \end{cases}
% \]
% where \(W_i\) and \(W_j\) are subgraphs corresponding to the sets \(S_i\) and \(S_j\), and \(d(\mathcal{G}[W_i], \mathcal{G}[W_j])\) measures the distance between them. \\$$
% Next, use each of these top-k densest subgraphs to form a hyperedge in \( \mathcal{G}_h \). Specifically, each hyperedge \( e_i \in \mathcal{E} \) corresponds to the vertex set \( S_i \):
% \[
% e_i = S_i \quad \text{for } i = 1, 2, \ldots, k.
% \] \\
Additionally, if a vertex belongs to a subgraph, it should also belong to the corresponding hyperedge. This ensures that the vertices included in the densest subgraphs are accurately represented in the hypergraph, maintaining the integrity of the high-order correlations:

\[
v \in S_i \implies v \in e_i \quad \text{for all } v \in \mathcal{V}, \quad i = 1, 2, \ldots, k.
\]

\section{Methodology}
Our {\em DOSAGE} algorithm creates hyperedges based on the overlapping densest subgraphs in a graph based on three parameters that are $K$, which is the total number of subgraphs and hence a number of hyperedges, the minimum and maximum size that is the minimum and maximum number of vertices in one hyperedge. Full coverage of the graph should also be taken into account since we want all the vertices to be represented by the hypergraph and also to make sure that we not only obtain the densest regions but also fully cover the whole graph so that we do not miss any important piece of information during the construction process.

\subsection{Complexity of the DOSAGE Algorithm}
In this part, we prove the NP-completeness of the constrained Top-k-Overlapping Densest Subgraphs problem by demonstrating a polynomial-time bidirectional reduction to and from known NP-complete problems. Specifically, we show that the problem can be reduced to and from the k-Clique or 3-Clique problems.
\begin{lemma}
Given an instance \(\mathcal{G} = (\mathcal{V}, \mathcal{E}')\) of the 3-Clique Partition problem, we can construct an instance of the constrained Top-k-Overlapping Densest Subgraphs problem such that if \(\mathcal{G}\) can be partitioned into three cliques, we can compute a set \(W = \{\mathcal{G}[S_1], \mathcal{G}[S_2], \mathcal{G}[S_3]\}\) where \(r(W) \geq \frac{|\mathcal{V}| - 3}{2} + 18|\mathcal{V}|^3\).
\end{lemma}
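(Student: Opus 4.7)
The plan is a direct polynomial-time reduction followed by a calculation of $r(W)$ on the partition-induced candidate. Given $\mathcal{G} = (\mathcal{V}, \mathcal{E}')$ with $n = |\mathcal{V}|$, I keep the same graph and instantiate the constrained top-$k$ overlapping densest subgraphs problem with $k = 3$, size bounds $\alpha = 1$ and $\beta = n$ (permissive enough to admit any block sizes arising in a 3-clique partition), and trade-off parameter $\lambda = 3n^3$. The choice of $\lambda$ is the key calibration: it is designed so that three pairwise-disjoint subgraphs contribute exactly $18n^3$ to the diversity portion of $r(W)$, matching the second summand of the target bound.

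Assuming $\mathcal{G}$ admits a 3-clique partition $\{S_1, S_2, S_3\}$ with $|S_i| = n_i$ and $n_1 + n_2 + n_3 = n$, I take $W = \{\mathcal{G}[S_1], \mathcal{G}[S_2], \mathcal{G}[S_3]\}$. Because each $\mathcal{G}[S_i]$ is a clique, $|\mathcal{E}'(S_i)| = \binom{n_i}{2}$, so each piece has density $(n_i - 1)/2$; summing yields $\mathrm{dens}(W) = (n - 3)/2$, which is the first summand of the bound. For the diversity term I use the disjointness of the $S_i$: each $|S_i \cap S_j|$ vanishes, so the pairwise distance $d(\mathcal{G}[S_i], \mathcal{G}[S_j])$ equals $2$, and summing over the three unordered pairs and multiplying by $\lambda$ gives $6\lambda = 18n^3$. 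Adding the two contributions establishes $r(W) = (n-3)/2 + 18n^3$, matching the claimed lower bound with equality.

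The only real obstacle is the calibration of the reduction parameters. If $\lambda$ were set smaller than $\Theta(n^3)$, the $18n^3$ term would be unreachable from three pairwise-disjoint cliques, so this specific cubic choice is essentially forced by the statement; and if $\alpha$ or $\beta$ were tightened, a yes-instance of 3-Clique Partition with unbalanced blocks would no longer produce a feasible $W$. Everything else is an unambiguous application of the definitions of $\mathrm{dens}(\cdot)$, $d(\cdot,\cdot)$, and $r(\cdot)$ given in the preceding section, so this direction of the reduction is essentially bookkeeping; the harder converse direction (showing that the bound forces a 3-clique partition) will be the real content of the full NP-completeness proof, where the large $\lambda$ must dominate any density gain an overlapping solution could produce.
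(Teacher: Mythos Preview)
Your argument is correct and follows the same route as the paper: reuse the input graph, take the three clique blocks of the partition as $S_1, S_2, S_3$, and evaluate $r(W)$. In fact your version is more explicit than the paper's, since you pin down the parameters $k=3$, $\alpha=1$, $\beta=n$, $\lambda=3n^3$ and carry out the density and distance computations that the paper only asserts.
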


\begin{proof}
    We take the input graph \(\mathcal{G} = (\mathcal{V}, \mathcal{E}')\) from the 3-Clique Partition problem and use it as the input graph for the constrained Top-k-Overlapping Densest Subgraphs problem.
    If \(\mathcal{G}\) can be partitioned into three cliques \(\mathcal{V}_1\), \(\mathcal{V}_2\), and \(\mathcal{V}_3\), then the corresponding subgraphs \(\mathcal{G}[\mathcal{V}_1]\), \(\mathcal{G}[\mathcal{V}_2]\), and \(\mathcal{G}[\mathcal{V}_3]\) are used to construct the solution \(W = \{\mathcal{G}[S_1], \mathcal{G}[S_2], \mathcal{G}[S_3]\}\).
    The value \(r(W)\) is calculated based on the densities of these subgraphs and the distances between them. The result is that \(r(W) \geq \frac{|\mathcal{V}| - 3}{2} + 18|\mathcal{V}|^3\).
\end{proof}

\begin{lemma}
Given a solution \(W = \{\mathcal{G}[S_1], \mathcal{G}[S_2], \mathcal{G}[S_3]\}\) for the constrained Top-k-Overlapping Densest Subgraphs problem with \(r(W) \geq \frac{|\mathcal{V}| - 3}{2} + 18|\mathcal{V}|^3\), we can find a partition of \(\mathcal{G} = (\mathcal{V}, \mathcal{E}')\) into three cliques.
\end{lemma}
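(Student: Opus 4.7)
The plan is to reverse-engineer the construction of Lemma~1 and use the numerical threshold $\frac{|\mathcal{V}|-3}{2} + 18|\mathcal{V}|^3$ to force $W$ into the very rigid structure of three pairwise disjoint cliques that together cover $\mathcal{V}$.

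First, I would pin down the value of $\lambda$ implicit in the threshold. If $\mathcal{V}$ decomposes into three disjoint cliques of sizes $n_1, n_2, n_3$ with $n_1+n_2+n_3 = |\mathcal{V}|$, then the total density is $\sum_i \binom{n_i}{2}/n_i = (|\mathcal{V}|-3)/2$, and the three pairs of subgraphs are fully disjoint, each contributing distance $2$ for a total of $6$. Matching $6\lambda$ with $18|\mathcal{V}|^3$ singles out $\lambda = 3|\mathcal{V}|^3$; this is the parameter value that makes the two directions of the reduction align.

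Next I would bound the total density contribution trivially by $\sum_i \text{dens}(\mathcal{G}[S_i]) \leq 3(|\mathcal{V}|-1)/2$, which is only linear in $|\mathcal{V}|$ and is dwarfed by the cubic coefficient $\lambda$. The forcing step then proceeds as follows. If any pair $(S_i, S_j)$ shares at least one vertex, then $|S_i \cap S_j|^2/(|S_i||S_j|) \geq 1/|\mathcal{V}|^2$, so the pairwise distance drops from $2$ by at least $1/|\mathcal{V}|^2$. Multiplying by $\lambda$, the distance term then loses at least $3|\mathcal{V}|$, which by itself exceeds the maximum possible density contribution. Hence the assumed threshold on $r(W)$ forces $S_1, S_2, S_3$ to be pairwise disjoint. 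Once disjointness is established, the distance term equals exactly $6\lambda = 18|\mathcal{V}|^3$, so the inequality collapses to $\sum_i \text{dens}(\mathcal{G}[S_i]) \geq (|\mathcal{V}|-3)/2$. Since $\text{dens}(\mathcal{G}[S_i]) \leq (|S_i|-1)/2$ with equality iff $\mathcal{G}[S_i]$ is a clique, and $\sum_i |S_i| \leq |\mathcal{V}|$ by disjointness, the inequality $\sum_i (|S_i|-1)/2 \geq (|\mathcal{V}|-3)/2$ forces both $\sum_i |S_i| = |\mathcal{V}|$ (full coverage) and equality in each $|\mathcal{E}'(S_i)| \leq \binom{|S_i|}{2}$, i.e., each $\mathcal{G}[S_i]$ is a clique on $|S_i|$ vertices. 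This produces the desired partition of $\mathcal{G}$ into three cliques.

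The main obstacle will be the quantitative forcing argument that rules out overlaps. The crude per-pair estimate $1/|\mathcal{V}|^2$ works because the constants $18$ and $3$ in the threshold were deliberately chosen with slack, but I would need to verify carefully that multiple simultaneously overlapping pairs cannot conspire to narrow the gap, for instance by allowing shared high-degree vertices to boost the density sum. A clean way to close this is to observe that $\sum_i \text{dens}(\mathcal{G}[S_i])$ is globally bounded by $3(|\mathcal{V}|-1)/2$ regardless of the overlap pattern, so the worst-case density gain from overlap cannot compensate for the cubic distance deficit incurred by even a single overlapping pair; this makes the argument robust irrespective of the combinatorics of how the $S_i$ intersect.
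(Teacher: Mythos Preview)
Your argument is correct and, in fact, supplies the rigorous quantitative reasoning that the paper's own proof omits entirely: the paper's proof of this lemma is a two-sentence assertion that the subgraphs $\mathcal{G}[S_1],\mathcal{G}[S_2],\mathcal{G}[S_3]$ ``must form a valid partition'' into cliques, with no computation. Your identification of $\lambda = 3|\mathcal{V}|^3$, the overlap-forcing estimate via $|S_i\cap S_j|^2/(|S_i||S_j|) \geq 1/|\mathcal{V}|^2$, and the tight density bound $\sum_i(|S_i|-1)/2 \leq (|\mathcal{V}|-3)/2$ under disjointness are exactly the ingredients needed to make the paper's claim an actual proof, and they mirror the standard argument for this style of reduction from the Top-$k$ densest subgraphs literature that the paper cites.
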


\begin{proof}
    We take the solution \(W\) from the constrained Top-k-Overlapping Densest Subgraphs problem and use the subgraphs \(\mathcal{G}[S_1]\), \(\mathcal{G}[S_2]\), and \(\mathcal{G}[S_3]\) to construct three cliques \(\mathcal{V}_1\), \(\mathcal{V}_2\), and \(\mathcal{V}_3\) in the original graph \(\mathcal{G} = (\mathcal{V}, \mathcal{E}')\).
    These cliques \(\mathcal{V}_1\), \(\mathcal{V}_2\), and \(\mathcal{V}_3\) must form a valid partition of \(\mathcal{V}\).
\end{proof}

\begin{theorem}
The constrained Top-$k$-Overlapping Densest Subgraphs problem is NP-complete.
\end{theorem}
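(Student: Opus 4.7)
The plan is to establish NP-completeness in two steps: first, show that the constrained Top-$k$-Overlapping Densest Subgraphs (CTODS) problem belongs to NP; second, combine the two preceding lemmas into a polynomial-time reduction from the 3-Clique Partition problem, which is known to be NP-complete.

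For membership in NP, I would exhibit a polynomial-time verifier. A candidate certificate is a collection $W=\{\mathcal{G}[S_1],\dots,\mathcal{G}[S_k]\}$ of vertex subsets. Checking $\alpha \leq |S_i| \leq \beta$ for each $i$ takes linear time, computing each $\mathrm{dens}(\mathcal{G}[S_i]) = |\mathcal{E}'(S_i)|/|S_i|$ takes $O(|\mathcal{V}|^2)$ time per subgraph, and evaluating the $\binom{k}{2}$ pairwise distances $d(\mathcal{G}[W_i],\mathcal{G}[W_j])$ is polynomial in $|\mathcal{V}|$ and $k$. Thus the objective $r(W)$ can be compared against the target threshold in polynomial time, witnessing membership in NP.

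For NP-hardness, I would invoke the reduction implicit in Lemmas~1 and~2. Starting from an arbitrary instance $\mathcal{G}=(\mathcal{V},\mathcal{E}')$ of 3-Clique Partition, the reduction simply passes $\mathcal{G}$ along (with $k=3$, appropriate size bounds $\alpha,\beta$, an appropriate $\lambda$, and the threshold $\tau = \tfrac{|\mathcal{V}|-3}{2} + 18|\mathcal{V}|^3$) as the CTODS instance; this is trivially polynomial. Lemma~1 provides the forward direction: a 3-clique partition yields a collection $W$ with $r(W)\geq \tau$. Lemma~2 provides the converse: any $W$ achieving $r(W)\geq \tau$ can be turned back into a valid 3-clique partition of $\mathcal{V}$. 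Together these give an if-and-only-if between yes-instances of 3-Clique Partition and yes-instances of CTODS at threshold $\tau$, so CTODS is NP-hard, and combined with NP-membership, NP-complete.

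The main obstacle I expect is not the framing but the numerical calibration of the threshold. The value $\tfrac{|\mathcal{V}|-3}{2} + 18|\mathcal{V}|^3$ must be tight enough that only genuine 3-clique partitions (not merely dense but non-clique subgraphs, nor heavily overlapping ones) can clear it, which forces a careful analysis of how the density term $\mathrm{dens}(W)$ and the pairwise-distance term $\lambda\sum d(\mathcal{G}[W_i],\mathcal{G}[W_j])$ trade off. In particular, one must argue that the distance component is simultaneously maximized only when $|S_i \cap S_j|=0$ (forcing a true partition), while the density component is maximized only when each $\mathcal{G}[S_i]$ is a clique. This delicate dual-extremal argument is what Lemmas~1 and~2 are doing under the hood, and assuming them the theorem follows immediately.
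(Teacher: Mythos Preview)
Your proposal is correct but takes a different route from the paper's own proof of the theorem. You build the NP-hardness argument on Lemmas~1 and~2, treating them as the two halves of a reduction from the \emph{3-Clique Partition} problem with threshold $\tau=\tfrac{|\mathcal{V}|-3}{2}+18|\mathcal{V}|^3$. The paper, by contrast, does not invoke those lemmas inside the theorem proof at all; instead it gives a self-contained pair of reductions to and from a separately defined \emph{3-Clique} decision problem (``does $\mathcal{G}$ contain a clique of size 3?''), with a concrete parameter choice $k=3$, $\alpha=\beta=3$, $r=1$ and a degenerate distance function forcing the three subgraphs to coincide. Your approach has the virtue of actually using the two lemmas that precede the theorem and of reducing from a problem (3-Clique Partition, equivalently 3-coloring the complement) that is genuinely NP-complete; the paper's chosen source problem, as literally stated, is solvable in $O(|\mathcal{V}|^3)$ time, so your route is arguably on firmer ground. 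What the paper's version buys is that it spells out explicit parameter settings for the CTODS instance rather than leaving $\alpha,\beta,\lambda$ as ``appropriate'' values, and it also includes a discussion of polynomially-solvable special cases that your proposal omits.
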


\begin{proof}
We first formally define the decision problems, showing that the problem is in NP, and then provide polynomial-time reductions in both directions.

\textit{Constrained Top-k-Overlapping Densest Subgraphs (CTODS):}\\
\textbf{Given:} A graph \(\mathcal{G}_h = (\mathcal{V}, \mathcal{E})\), positive integers \(k\), \(\alpha\), \(\beta\), and a real number \(r\).\\
\textbf{Question:} Do there exist \(k\) subgraphs \(S_1, S_2, \ldots, S_k \subseteq \mathcal{V}\) such that:
\begin{enumerate}[(a)]
    \item \(\alpha \leq |S_i| \leq \beta\) for all \(i = 1, \ldots, k\),
    \item The density of each \(S_i\) is at least \(r\),
    \item The subgraphs satisfy the overlap constraint as defined by the distance function \(d(\mathcal{G}[U], \mathcal{G}[Z])\).
\end{enumerate}

\textit{3-Clique:}\\
\textbf{Given:} A graph \(\mathcal{G} = (\mathcal{V}, \mathcal{E}')\) and a positive integer \(k\).\\
\textbf{Question:} Does \(\mathcal{G}\) contain a clique of size 3?

\textbf{2. CTODS is in NP:}\\
To show CTODS is in NP, we prove that a solution can be verified in polynomial time. Given a set of \(k\) subgraphs, we can:
\begin{enumerate}[(a)]
    \item Verify the size constraints in \(O(k)\) time,
    \item Calculate the density of each subgraph in \(O(|\mathcal{V}| + |\mathcal{E}|)\) time,
    \item Verify the overlap constraints in \(O(k^2 \cdot |\mathcal{V}|)\) time.
\end{enumerate}
Thus, the verification can be done in polynomial time, so CTODS is in NP.

\textbf{3. Reduction from CTODS to 3-Clique:}\\
We construct a graph \(\mathcal{G}_P = (\mathcal{V}_P, \mathcal{E}_P)\) as follows:
\begin{itemize}
    \item Each vertex \(v \in \mathcal{V}_P\) represents a potential subgraph \(S_i\) in \(\mathcal{G}_h\) that satisfies the size and density constraints.
    \item Add an edge between vertices \(u, v \in \mathcal{V}_P\) if the corresponding subgraphs in \(\mathcal{G}_h\) can form a valid pair (i.e., they satisfy the overlap constraints).
\end{itemize}

This construction can be done in polynomial time:
\begin{itemize}
    \item We can enumerate all subgraphs of size \(\alpha\) to \(\beta\) in \(O(|\mathcal{V}|^\beta)\) time.
    \item For each subgraph, we can check its density in \(O(|\mathcal{V}| + |\mathcal{E}|)\) time.
    \item We can check the overlap constraints for each pair of subgraphs in \(O(|\mathcal{V}|^2)\) time.
\end{itemize}

Now, finding \(k\) overlapping dense subgraphs in \(\mathcal{G}_h\) is equivalent to finding a \(k\)-clique in \(\mathcal{G}_P\). In particular, a 3-clique in \(\mathcal{G}_P\) corresponds to a solution for CTODS with \(k = 3\).

\textbf{4. Reduction from 3-Clique to CTODS:}\\
Given an instance of 3-Clique on graph \(\mathcal{G} = (\mathcal{V}, \mathcal{E}')\), we construct an instance of CTODS as follows:
\begin{itemize}
    \item Use the same graph \(\mathcal{G}\),
    \item Set \(k = 3\), \(\alpha = 3\), \(\beta = 3\), and \(r = 1\),
    \item Define the distance function \(d(\mathcal{G}[U], \mathcal{G}[Z])\) to return 0 if \(U = Z\) and 2 otherwise.
\end{itemize}

This construction ensures that:
\begin{itemize}
    \item We are looking for exactly three subgraphs (\(k = 3\)),
    \item Each subgraph must have exactly 3 vertices (\(\alpha = \beta = 3\)),
    \item Each subgraph must be fully connected (\(r = 1\) requires maximum density),
    \item The subgraphs must be identical (distance function).
\end{itemize}

A solution to this CTODS instance exists if and only if \(\mathcal{G}\) contains a 3-clique.

\textbf{5. Additional Cases:}\\
While the general problem is NP-complete, there are cases that can be solvable in polynomial time:
\begin{itemize}
    \item Case 1: when \(\alpha = \beta = 1\), we  simply select individual vertices; this step requires \(O(|\mathcal{V}|)\) time.
    \item When the hop size constraint is 1, we only consider direct neighbors; this step requires \(O(|\mathcal{V}| + |\mathcal{E}|)\) time.
    \item For specific density values that correspond to well-known structures (e.g., triangles in simple graphs), we might have polynomial-time algorithms.
    \item Case 2: when \(\alpha = \beta = 2\), the problem reduces to finding dense edges. This could lead to a polynomial-time solvable problem in bipartite graphs, as many graph problems become tractable on bipartite graphs.
    \item Case 3: when \(\alpha\) and especially \(\beta\) are close to or equal to \(|\mathcal{V}|\), the problem may become polynomial-time solvable. In this case, we are essentially looking for dense subgraphs that include most or all vertices of the original graph. This significantly reduces the search space and may allow for efficient algorithms.
\end{itemize}

These cases highlight that the hardness of the problem can vary significantly depending on specific constraints and graph structure. For instance, in bipartite graphs or when the subgraph size constraints approach the size of the entire graph, the problem can become polynomially-solvable.

However, it is important to note that these special cases do not contradict the NP-completeness of the general problem. The general case, where \(\alpha\) and \(\beta\) allow for a wide range of subgraph sizes and the graph structure is unrestricted, remains NP-complete.

By demonstrating that CTODS is in NP, providing polynomial-time reductions in both directions and considering both the trivial and these additional cases, we conclude that the general Constrained Top-k-Overlapping Densest Subgraphs problem is NP-complete while acknowledging that specific instances or constraints may lead to polynomial-time solvable variants.
\end{proof}

\begin{figure*}[t]
\includegraphics[width=0.95\textwidth, height=0.17\textheight]{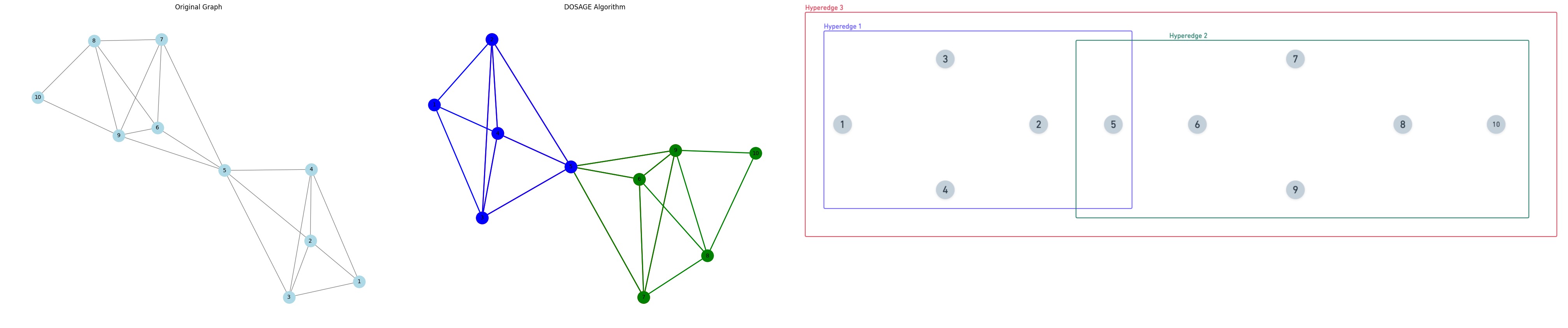}
\captionsetup{width=0.45\textwidth}
\caption{Conversion of a graph into hypergraph}
\label{Example}
\end{figure*}

\subsection{Algorithm}
As we understood the reason for setting constraints to the CTODS problem, we now present the pseudo-code of the DOSAGE algorithm. First, we discuss the DOSAGE algorithm, followed by the supporting functions as needed, and the hypegraph construction algorithm. Since we understood the reason for setting constraints to our algorithm, for a better understanding of the DOSAGE algorithm, we present the pseudo-code of this algorithm first, and then we will walk through each step of our algorithm. First, we start with our DOSAGE algorithm, which is the most critical aspect of our code. Since we have already talked about the density, distance, and objective function in the previous sections, we provided the code for them in the second algorithm as supporting functions.

The $DensestSubgraph$ function finds the subgraph that has the maximum density based on Goldberg's algorithm \cite{fast-goldberg}, constrained by minimum and maximum subgraph sizes ($\alpha$, $\beta$) and the diameter, $\delta$. The function repeatedly checks the density and diameter of the current subgraph. If size and diameter conditions are met, then the density of that subgraph is calculated. 

The output of the function is the densest subgraph found in our graph. The $IsDistinct$ function helps the $DensestDistinctSubgraph$ function to check whether a given subgraph $\mathcal{G}[\mathcal{S}]$ is distinct from all the subgraphs already stored in $W$. The $DensestDistinctSubgraph$ considers all possible subgraphs in the range of $\alpha$ to $\beta$ and checks if the diameter condition is met and is distinct from those already found. Finally, it calculates the objective function for that subgraph, and if the value is higher than the maximum, the subgraph is stored as the best.

\begin{algorithm}[H]
\caption{DOSAGE: Densest Overlapping Subgraphs via Adaptive Greedy Enumeration}
\begin{algorithmic}[1]

\Function{DensestSubgraph}{$\mathcal{G}, \alpha, \beta, \delta$}
    \State $\mathcal{G}_{\text{best}} \gets \text{null}$
    \State $d_{\text{best}} \gets 0$
    \State $\mathcal{G}_{\text{current}} \gets \mathcal{G}.\text{copy}()$
    \State $\text{degrees} \gets \text{ComputeDegrees}(\mathcal{G}_{\text{current}})$
    \While{$|\mathcal{V}(\mathcal{G}_{\text{current}})| > 0$}
        \If{$\text{Diameter}(\mathcal{G}_{\text{current}}) \leq \delta$}
            \State $d_{\text{current}} \gets \text{Density}(\mathcal{G}_{\text{current}})$
            \If{$d_{\text{current}} > d_{\text{best}}$ and $\alpha \leq |\mathcal{V}(\mathcal{G}_{\text{current}})| \leq \beta$}
                \State $d_{\text{max}} \gets d_{\text{current}}$
                \State $\mathcal{G}_{\text{best}} \gets \mathcal{G}_{\text{current}}.\text{copy}()$
            \EndIf
        \EndIf
        \State $\text{min\_degree} \gets \min(\text{degrees})$
        \State $\mathcal{V}_{\text{remove}} \gets \{v \in \mathcal{V}(\mathcal{G}_{\text{current}}) : \text{degree}(v) = \text{min\_degree}\}$
        \State $\text{RemoveNodes}(\mathcal{G}_{\text{current}}, \mathcal{V}_{\text{remove}})$
        \State $\text{UpdateDegrees}(\text{degrees}, \mathcal{G}_{\text{current}})$
        \If{$|\mathcal{V}(\mathcal{G}_{\text{current}})| < \alpha$}
            \State \textbf{break}
        \EndIf
    \EndWhile
    \State \Return $\mathcal{G}_{\text{best}}$
\EndFunction

\Function{IsDistinct}{$\mathcal{G}[\mathcal{S}], W$}
    \State \Return $\forall \mathcal{G}[\mathcal{S}_i] \in W : \text{Distance}(\mathcal{G}[\mathcal{S}], \mathcal{G}[\mathcal{S}_i]) > 0$
\EndFunction

\Function{DensestDistinctSubgraph}{$\mathcal{G}, W, \lambda, \alpha, \beta, \delta$}
    \State $d_{\text{max}} \gets 0$
    \State $\mathcal{G}_{\text{best}} \gets \text{null}$
    \For{$\text{subset\_size} \in [\alpha, \beta]$}
        \For{$\mathcal{S} \in \text{Combinations}(\mathcal{V}, \text{subset\_size})$}
            \State $\mathcal{G}[\mathcal{S}] \gets \text{InducedSubgraph}(\mathcal{G}, \mathcal{S})$
            \If{$\text{Diameter}(\mathcal{G}[\mathcal{S}]) > \delta$}
                \State \textbf{continue}
            \EndIf
            \If{$\text{IsDistinct}(\mathcal{G}[\mathcal{S}], W)$}
                \State $W_{\text{temp}} \gets W \cup \{\mathcal{G}[\mathcal{S}]\}$
                \State $d_{\text{current}} \gets \text{ObjectiveFunction}(W_{\text{temp}}, \lambda)$
                \If{$d_{\text{current}} > d_{\text{max}}$}
                    \State $d_{\text{max}} \gets d_{\text{current}}$
                    \State $\mathcal{G}_{\text{best}} \gets \mathcal{G}[\mathcal{S}]$
                \EndIf
            \EndIf
        \EndFor
    \EndFor
    \State \Return $\mathcal{G}_{\text{best}}$
\EndFunction

\vspace{1em}

\Function{DOSAGE}{$\mathcal{G}, k, \lambda, \alpha, \beta$}
    \If{$\mathcal{G}$ is connected}
        \State $\delta \gets 2 \cdot \text{AverageShortestPathLength}(\mathcal{G})$
    \Else
        \State $\delta \gets \log_2(|\mathcal{V}|)$
    \EndIf

    \State $\mathcal{G}_{\text{initial}} \gets \text{DensestSubgraph}(\mathcal{G}, \alpha, \beta, \delta)$
    \If{$|\mathcal{V}(\mathcal{G}_{\text{initial}})| > 0$}
        \State $W \gets \{\mathcal{G}_{\text{initial}}\}$
    \Else
        \State $W \gets \emptyset$
    \EndIf

    \While{$|W| < k$}
        \State $\mathcal{G}_{\text{next}} \gets \text{DensestDistinctSubgraph}(\mathcal{G}, W, \lambda, \alpha, \beta, \delta)$
        \If{$\mathcal{G}_{\text{next}} = \text{null}$ or $|\mathcal{V}(\mathcal{G}_{\text{next}})| = 0$}
            \State \textbf{break}
        \EndIf
        \State $W \gets W \cup \{\mathcal{G}_{\text{next}}\}$
    \EndWhile

    \State $\mathcal{G}_h^{(\alpha, \beta)} \gets \text{Hypergraph}(W)$ 
    \State \Return $\mathcal{G}_h^{(\alpha, \beta)}$ 
\EndFunction
\end{algorithmic}
\end{algorithm}

DOSAGE algorithm, which is the main function, finds the top-$K$ overlapping subgraphs in the graph by utilizing $DensestSubgraph$ and $DensestDistinctSubgraph$ functions. Then, it uses the Hypergraph function to create the hypergraph based on the subgraphs found by the algorithm. In the hypergraph function, the vertex set of the hypergraph $\mathcal{V}$ is derived directly from the vertices of the original graph $\mathcal{V}(\mathcal{G})$, which means that all vertices from the original graph are covered by the hypergraph. $\mathbf{X}_i$ represents the initial feature matrix for the nodes in the hypergraph. The resulting hypergraph $\mathcal{R}$ and the node feature matrix $\mathbf{X}_i$ are passed as inputs to a Hypergraph Neural Network (HGNN). The function returns the output feature matrix $\mathbf{R}_{\text{out}}$, which represents the transformed or learned features for the nodes after passing them through the HGNN.

\begin{algorithm}[t!]
\caption{Supporting Functions for DOSAGE}
\begin{algorithmic}[1]
\Function{Density}{$\mathcal{G}$}
    \If{$|\mathcal{V}(\mathcal{G})| = 0$}
        \State \Return \text{null}
    \EndIf
    \State \Return $\frac{|\mathcal{E}(\mathcal{G})|}{|\mathcal{V}(\mathcal{G})|}$
\EndFunction

\Function{Distance}{$\mathcal{G}[U], \mathcal{G}[Z]$}
    \If{$|\mathcal{V}(U)| = 0$ or $|\mathcal{V}(Z)| = 0$}
        \State \Return 2
    \EndIf
    \If{$\mathcal{V}(U) = \mathcal{V}(Z)$}
        \State \Return 0
    \EndIf
    \State $\text{intersection\_size} \gets |\mathcal{V}(U) \cap \mathcal{V}(Z)|$
    \State \Return $2 - \frac{\text{intersection\_size}^2}{|\mathcal{V}(U)| \cdot |\mathcal{V}(Z)|}$
\EndFunction

\Function{ObjectiveFunction}{$W, \lambda$}
    \State $\text{total\_density} \gets \sum_{\mathcal{G}[\mathcal{S}] \in W} \frac{|\mathcal{E}(\mathcal{G}[\mathcal{S}])|}{|\mathcal{V}(\mathcal{G}[\mathcal{S}])|}$
    \State $\text{total\_distance} \gets \sum_{i < j < |W|} \text{Distance}(\mathcal{G}[S_i], \mathcal{G}[S_j])$
    \State \Return $\text{total\_density} + \lambda \cdot \text{total\_distance}$
\EndFunction

\end{algorithmic}
\end{algorithm}

\begin{algorithm}[t!]
\caption{Hypergraph Pass to HGNN}
\begin{algorithmic}[1]
\Function{Hypergraph}{$W$}
    \State $\mathcal{G}_h \gets (\mathcal{V}, \mathcal{E}_h)$ 
    \State $\mathcal{V} \gets \mathcal{V}(\mathcal{G})$ 

    \ForAll{$\mathcal{G}[\mathcal{S}] \in W$}
        \State $\mathcal{E}_h \gets \mathcal{E}_h \cup \{\mathcal{V}(\mathcal{G}[\mathcal{S}])\}$ 
    \EndFor

    \State $\mathbf{X}_i \gets \text{InitializeNodeFeatures}(\mathcal{V})$ 
    \State $\mathbf{R}_{\text{out}} \gets \text{HGNN}(\mathcal{G}_h, \mathbf{X}_i)$ 

    \State \Return $\mathbf{R}_{\text{out}}$ 
\EndFunction
\end{algorithmic}
\end{algorithm}

By using the DOSAGE algorithm, the methodology ensures that the resulting hypergraph captures the most significant dense regions, accounting for potential overlaps and ensuring high-quality hyperedges. This approach provides a scalable and efficient means to enhance hypergraph-based representations and analyses. As such, we establish an efficient method for identifying interconnected substructures within the graph. The resulting hypergraph \(\mathcal{G}_h^{(\alpha, \beta)}\) can capture high-order correlations in the data, providing a more expressive and informative representation than traditional graph structures \cite{node-classification}. This is particularly useful in applications such as community detection in social networks and motif discovery in biological networks, where overlapping dense regions are of interest \cite{dense}. 
% The features that learned through passing our hypergraph to the HGNN could be used for machine learning tasks such as node classification, link prediction, or other graph-related tasks \cite{node-classification}.

\section{Experiments}
In this section, we compare the performance of our hypergraph and other methods on node classification tasks. We experimented with DOSAGE on two datasets, the Cora and Cooking-200 datasets \cite{cora}. 

% \subsection{Datasets}
% The Cooking dataset has been collected from Yummly.com \footnote{\href{https://www.yummly.com}{https://www.yummly.com}}, in which vertex denotes the dish and hyperedge denotes the ingredient. Each dish is also associated with category information, which indicates the dish’s cuisine, such as Chinese, Japanese, French, and Russian. The Cora dataset is a citation network of 2,708 machine-learning papers organized into seven distinct classes. These papers are interlinked by 5,429 citations, forming a directed graph that maps out how papers cite each other. Each paper is represented by a binary word vector derived from a dictionary of 1,433 unique words, indicating the presence or absence of specific words in the paper.

\subsection{Results and Discussion}
The experimental results for the Cora dataset are presented in Table I, while Table II shows the results for the Cooking-200 dataset. We compare the accuracy and F1-score of different models, including Graph Convolutional Network (GCN) \cite{gcn}, Graph Attention Network (GAT) \cite{gat}, Graph Sample and Aggregation (GraphSAGE) \cite{sage}, Graph Isomorphism Network (GIN) \cite{gin}, Graph Convolution (GraphConv) \cite{graph-conv}, Hypergraph Convolutional Network (HyperGCN) \cite{hgcn}, Hypergraph Attention Network (Hyper-Atten) \cite{hyperatt}, Hypergraph Neural Network (HGNN) \cite{HGNNs}, Hypergraph Neural Network\textsuperscript{+} (HGNN\textsuperscript{+}) \cite{HGNNs}, and our proposed DOSAGE method.

% Table for Cora Dataset
\begin{table}[t]
\centering
\caption{Experimental results on the Cora Dataset.}
\renewcommand{\arraystretch}{1.3}
\begin{tabular}{lcc}
\hline
\textbf{} & \textbf{Accuracy} & \textbf{F1\_score} \\
\hline
\textbf{GCN} & 0.5411 & 0.5180 \\
\textbf{GAT} & 0.5519 & 0.5237 \\
\textbf{GraphSAGE} & 0.5741 & 0.5331 \\
\textbf{GIN} & 0.5767 & 0.5590 \\
\textbf{GraphConv} & 0.5743 & 0.5655 \\
\textbf{HyperGCN} & 0.5844 & 0.5701 \\
\textbf{Hyper-Atten} & 0.6589 & 0.6312 \\
\textbf{HGNN} & 0.6650 & 0.6478 \\
\textbf{HGNN\textsuperscript{+}} & 0.6701 & 0.6512 \\
\textbf{HGNN\textsuperscript{+} using DOSAGE} & \textbf{0.7103} & \textbf{0.7067} \\
\hline
\end{tabular}
\end{table}
Table I shows that the $DOSAGE$ algorithm significantly outperforms traditional GNN models, including GCN, GAT, and GraphSAGE, as well as hypergraph-based models, namely HyperGCN, HGNN, and Hyper-Atten. The proposed $DOSAGE$ model achieves the highest accuracy of 71.03\% and an F1-score of 70.67\%. These results demonstrate that $DOSAGE$, by effectively generating hyperedges using the densest overlapping subgraphs, captures complex relationships between nodes that are not adequately represented by traditional GNNs or even by other hypergraph models.

In the Cooking-200 dataset (Table II), $DOSAGE$ again outperforms all other models, achieving an accuracy of 45.72\% and an F1-score of 40.19\%. While the gains in performance are not as large as those seen in the Cora dataset, $DOSAGE$ still shows a clear advantage over HGNN and HGNN\textsuperscript{+}. The challenges posed by the Cooking-200 dataset, such as a higher degree of sparsity and more complex relationships between dishes and ingredients, are better addressed by the $DOSAGE$ algorithm's ability to model these intricacies through the densest overlapping subgraphs.

The results from both datasets indicate that the $DOSAGE$ algorithm's hypergraph construction method, based on the densest overlapping subgraphs, offers superior performance in node classification tasks compared to existing GNN and HGNN models. By considering not only the density but also the overlapping nature of subgraphs, $DOSAGE$ effectively captures richer and more nuanced relationships within the data. This results in better node embeddings, leading to improved classification performance. 
% Moreover, the consistent performance improvement across different datasets highlights the generalizability and robustness of the $DOSAGE$ algorithm.

However, it is important to note that the $DOSAGE$ model took approximately three minutes longer to execute than HGNN\textsuperscript{+}. While this increase in computation time is relatively minor and does not detract from the overall performance gains, it does suggest a potential area for improvement. 
% Optimizing the efficiency of the $DOSAGE$ algorithm could help reduce this time difference, making the model even more competitive in practical applications.

\section{Future Work}
%While the $DOSAGE$ algorithm has demonstrated significant improvements in the construction of hypergraphs for node classification tasks, there are several avenues for future work that could further enhance its efficiency and effectiveness.

%\subsection{Improving Computational Efficiency}
One of the key areas for future development is optimizing the computational efficiency of the $DOSAGE$ algorithm. Currently, the process of identifying and generating the densest overlapping subgraphs can be computationally expensive, particularly for large-scale datasets with high node and edge counts. Notably, the $DOSAGE$ model took approximately three minutes longer to execute than HGNN\textsuperscript{+}. While this increase in time is not substantial, it highlights the need for further optimization. 
% To address this, we plan to explore advanced optimization techniques that reduce the time complexity of the algorithm. This could include parallel processing strategies, more efficient data structures, and heuristic-based approaches that approximate dense subgraphs with lower computational overhead. The goal is to maintain the accuracy benefits of $DOSAGE$ while making it scalable to larger and more complex graphs.

% Table for Cooking-200 Dataset
\begin{table}[t]
\centering
\caption{Experimental results on the Cooking-200 Dataset.}
\renewcommand{\arraystretch}{1.3}
\begin{tabular}{lcc}
\hline
\textbf{} & \textbf{Accuracy} & \textbf{F1\_score} \\
\hline
\textbf{GCN} & 0.3110 & 0.2680 \\
\textbf{HGNN} & 0.3220 & 0.2749 \\
\textbf{HGNN\textsuperscript{+}} & 0.4294 & 0.3725 \\
\textbf{HGNN\textsuperscript{+} using DOSAGE} & \textbf{0.4572} & \textbf{0.4019} \\
\hline
\end{tabular}
\end{table}

%\subsection{Dynamic Hyperedge Construction}
Another promising direction for future work is developing a dynamic mechanism for hyperedge construction. In the current implementation of $DOSAGE$, the hyperedges formed from subgraphs are static, meaning they do not change in the HGNN training phase once they are created. As such, we envision a more adaptive system where the subgraphs can update themselves based on feedback from the hypergraph. This feedback loop would allow the subgraphs to refine their structure over time, potentially leading to even more accurate and representative hyperedges.

% For instance, after initial hypergraph construction and a round of training with the Hypergraph Neural Network (HGNN), performance metrics and learned features could be analyzed to identify areas where hyperedges might be improved. Subgraphs could then be dynamically adjusted—such as by adding or removing nodes or by re-evaluating the density and overlap criteria—to better capture the underlying relationships in the data. This adaptive approach would create a more iterative and responsive hypergraph modeling process, potentially leading to continuous improvements in model performance as the hyperedges evolve.

\subsection{Conclusion}
In this paper, we introduced the DOSAGE algorithm, a novel approach to hypergraph construction that leverages densest overlapping subgraphs to improve node classification tasks. Unlike traditional graph neural networks (GNNs) and existing hypergraph neural networks (HGNNs), DOSAGE focuses on capturing complex relationships within data by constructing hyperedges that reflect more intricate and overlapping structures.

The key contributions of our work include the development of the DOSAGE algorithm, which provides a robust method for generating hyperedges that enhance the expressiveness of hypergraph models. We demonstrated that this approach not only addresses the limitations of existing hyperedge construction techniques but also significantly improves classification accuracy across different datasets.

The numerical results presented in this paper underscore the advantages of the DOSAGE algorithm. On the Cora dataset, DOSAGE achieved the highest accuracy of 71.03\% and an F1-score of 70.67\%, outperforming several baseline models, including GCN, GAT, GraphSAGE, and other hypergraph-based methods like HyperGCN and Hyper-Atten. Similarly, in the Cooking-200 dataset, DOSAGE continued to show superior performance with an accuracy of 45.72\% and an F1-score of 40.19\%, demonstrating its effectiveness even in more challenging, sparsely connected datasets.

Overall, DOSAGE offers a powerful tool for hypergraph-based learning, providing a more nuanced understanding of data relationships and leading to improved outcomes in node classification tasks. The contributions and results presented in this paper pave the way for future research and applications in hypergraph neural networks, with the potential to extend these methods to even more complex and large-scale problems.

\section{Acknowledgments}
This research work has been partially supported by the Natural Sciences and Engineering Research Council of Canada, NSERC, Vector Institute for Artificial Intelligence. This work has been made possible by using the facilities of the Shared Hierarchical Academic Research Computing Network (SHARCNET: www.sharcnet.ca) and Compute/Calcul Canada.

\vspace{12pt}
\bibliographystyle{IEEEtran}
\bibliography{bibliography}
\end{document}